\newtheorem{theorem}{Theorem}
\newtheorem{proposition}[theorem]{Proposition}
\def\1{\bm{1}}
\def\vzero{{\bm{0}}}
\def\mI{{\bm{I}}}
\DeclareMathAlphabet{\mathsfit}{\encodingdefault}{\sfdefault}{m}{sl}
\SetMathAlphabet{\mathsfit}{bold}{\encodingdefault}{\sfdefault}{bx}{n}
\def\gN{{\mathcal{N}}}
\def\sR{{\mathbb{R}}}
\newcommand{\E}{\mathbb{E}}
\newcommand{\R}{\mathbb{R}}
\DeclareMathOperator*{\argmin}{arg\,min}
\begin{document}

%

%
\runningauthor{Di He, Shanda Li, Wenlei Shi, Xiaotian Gao, Jia Zhang, Jiang Bian, Liwei Wang, Tie-Yan Liu}

\twocolumn[

\aistatstitle{\textls[-10]{Learning Physics-Informed Neural Networks without Stacked Back-propagation}}

\aistatsauthor{ Di He$^1$\And Shanda Li$^2$\And Wenlei Shi$^3$\And 
Xiaotian Gao$^3$}

\vspace{1mm}

\aistatsauthor{ Jia Zhang$^3$\And Jiang Bian$^3$\And Liwei Wang$^{1,4}$\And Tie-Yan Liu$^3$}

\vspace{1mm}

\aistatsaddress{ $^1$ National Key Laboratory of General Artificial Intelligence,\\
School of Intelligence Science and Technology, Peking University \\
$^2$ Machine Learning Department, School of Computer Science, Carnegie Mellon University\\
$^3$ Microsoft Research $\qquad$ $^4$ Center for Data Science, Peking University} ]

\begin{abstract}
Physics-Informed Neural Network (PINN) has become a commonly used machine learning approach to solve partial differential equations (PDE). But, facing high-dimensional second-order PDE problems, PINN will suffer from severe scalability issues since its loss includes second-order derivatives, the computational cost of which will grow along with the dimension during stacked back-propagation. In this work, we develop a novel approach that can significantly accelerate the training of Physics-Informed Neural Networks. In particular, we parameterize the PDE solution by the Gaussian smoothed model and show that, derived from Stein's Identity, the second-order derivatives can be efficiently calculated without back-propagation. We further discuss the model capacity and provide variance reduction methods to address key limitations in the derivative estimation. Experimental results show that our proposed method can achieve competitive error compared to standard PINN training but is significantly faster. Our code is released at \url{https://github.com/LithiumDA/PINN-without-Stacked-BP}.
\end{abstract}

\section{INTRODUCTION}
Partial Differential Equations (PDEs) play a prominent role in describing the governing physical laws underlying a given system. Finding the solution of PDEs is important in understanding and predicting the physical phenomena from the laws. Recently, researchers sought to solve PDEs via machine learning methods by leveraging the power of deep neural networks \cite{khoo2017solving,han2018solving,long2018pde,long2019pde,raissi2019physics,sirignano2018dgm}. One of the seminal works in this direction is the Physics-Informed Neural Networks (PINN) approach \cite{raissi2019physics}. PINN parameterizes the solution as a neural network and learns the weights by minimizing some loss functional related to the PDEs, e.g., the PDE residual. 

Although the framework is general to learn any PDEs, few previous works experimented with PINN on high-dimensional second-order PDE problems. By thorough investigation, we find PINN training suffers from a significant scalability issue, mainly resulting from stacked back-propagation. 
Note that the PDE residual loss contains second-order derivatives. To update the weights of the neural network by gradient descent, one must first perform automatic differentiation (i.e.,  back-propagation) multiple times to compute the derivatives in the PDE and then calculate the loss. For high-dimensional second-order PDEs, the computational cost in such stacked back-propagation grows along with increasing input dimension \cite{pang2020efficient, meng2021estimating}. This will result in considerable inefficiency, making the PINN approach impractical in large-scale settings. Since some fundamental PDEs, such as Hamilton-Jacobi-Bellman Equations, are high-dimensional second-order PDEs, addressing the scalability issue of PINN becomes essential.

In this paper, we take a first step to tackle the scalability issue of PINN by developing a novel approach to train the model without stacked back-propagation. Particularly, we parameterize the PDE solution $u(x;\theta)$ as a Gaussian smoothed model, $u(x;\theta)=\E_{\delta\sim \gN(0,\sigma^2 \mI)}f(x+\delta;\theta)$, where $u$ transforms arbitrary base network $f$ by injecting Gaussian noise into input $x$. This transformation gives rise to a key property for $u$ where its derivatives to the input can be efficiently calculated \emph{without back-propagation}. Such property is derived from the well-known Stein's Identity \cite{stein1981estimation} that essentially tells that the derivatives of any Gaussian smoothed function $u$ can be reformulated as some expectation terms of the output of its base $f$, which can be estimated using Monte Carlo methods. 

To be concrete, given any PDE problem, we can replace the derivative terms in the PDE with Stein's derivative estimators, calculate the (estimated) residual losses in the forward pass, and then update the weight of the parameters via one-time back-propagation. Our method can accelerate the training of PINN from two folds of advantages. First, after using Stein's derivative estimators, we no longer need stacked back-propagation to compute the loss, therefore saving significant computation time. Second, since the new loss calculation only requires forward-pass computation, it is quite natural to parallelize the computation into distributed machines to further accelerate the training. 

Another point worth noting for the practical application of this method lies in the model capacity, which is highly related to the choice of the Gaussian noise level $\sigma$. We show that for large $\sigma$, the induced Gaussian smoothed models may not be expressive enough to approximate functions (i.e., learn solutions) with a large Lipschitz constant. Therefore, using a small value of $\sigma$ is usually a better choice in practice. However, a small $\sigma$ will lead to high-variance Stein's derivative estimation, which inevitably causes unstable training. We introduce several variance reduction methods that have been empirically verified to be effective for mitigating the problem. Further experiments demonstrate that, compared to standard PINN training, our proposed method can achieve competitive error but is significantly faster.

\section{RELATED WORKS}
\vspace{-0.2cm}
\subsection{Neural Approximation of PDE Solutions}
Neural approximation approaches rely on governing equations and boundary conditions (or variants) to train neural networks to approximate the corresponding PDE solutions. 
Physics-Informed Neural Networks (PINN) \cite{sirignano2018dgm, raissi2019physics} is one of the typical learning frameworks which constrains the output of deep neural networks to satisfy the given governing equations and boundary conditions. 
The application of PINN includes aerodynamic flows \cite{mao2020physics,yang2019predictive}, power systems \cite{misyris2020physics}, and nano optic \cite{chen2020physics}. Recently, there is also a growing body of works on studying the optimization and generalization properties of PINN. \cite{shin2020convergence} proved that the learned PINN will converge to the solution under certain conditions. \cite{krishnapriyan2021characterizing} proposed to use curriculum regularization to avoid failures during PINN training. Different from the neural operator approaches \cite{lu2019deeponet,li2020fourier}, the neural approximation methods can work in an unsupervised manner, without the need of labeled data generated by conventional PDE solvers.

\subsection{Better Training for Physics-Informed Neural Networks}
Despite the success of using PINN in solving various PDEs, researchers recently observed its training inefficiency in multiple aspects. For example, \cite{jagtap2020adaptive} discussed the architecture-wise inefficiency and introduced an adaptive activation function, which optimizes the network by dynamically changing the topology of the PINN loss function for different PDEs. The most relevant works related to our approach are \cite{sirignano2018dgm} and \cite{chiu2021canpinn}, both of which tried to tackle the inefficiency in automatic differentiation by using numerical differentiation. In \cite{sirignano2018dgm}, a Monte Carlo approximation method is proposed to estimate the numerical differentiation of second-order derivatives for some specific second-order PDEs. Concurrently to our work, \cite{chiu2021canpinn} introduced a method, which combines both auto-differentiation and numerical differentiation in PINN training to trade-off numerical truncation error and training efficiency. Compared to these two works, our designed approach can be applied to general PDEs and provide unbiased estimations of any derivatives without the need of back-propagation in the computation of the loss. Detailed discussions can be found in Section \ref{sec:var_reduction}.


\subsection{Gaussian Smoothed Model}
Injecting Gaussian noise to the input has been popularly used in machine learning to improve model's robustness. \cite{li2019secondorder,cohen2019certified} first used Gaussian smoothed models (a.k.a. randomized smoothing) to provide robustness guarantee when facing adversarial attacks. Since then, the smoothed models with different noise types have been developed for various scenarios \cite{zhai2020macer, yang2020randomized}. Leveraging Stein's Identity for efficient derivative estimation is not entirely new in machine learning. The method is one of the standard approaches in zero-order optimization where the exact derivatives cannot be obtained \cite{flaxman2004online, nesterov2017random,liu2020primer,pang2020efficient}. To the best of our knowledge, there is no previous work using Stein's Identity on Gaussian smoothed models for efficient PINN training.


\begin{figure*}[tb]
    \vspace{-4mm}
    \centering
    \includegraphics[width = 0.85\linewidth]{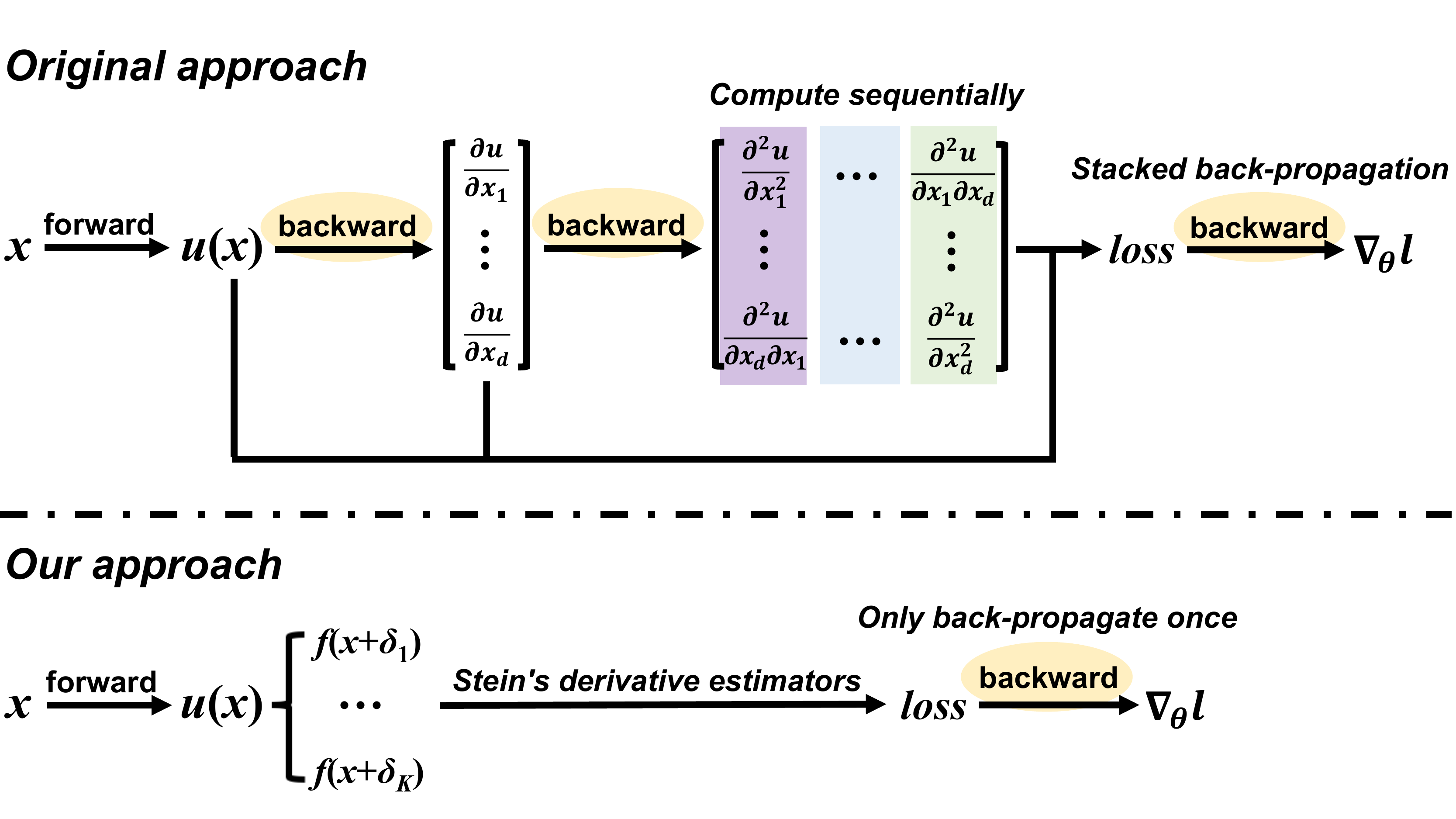}
    \vspace{-3mm}
    \caption{\textbf{Training process of the original PINN approach and our approach}. The original approach requires multiple backward passes to compute the derivative terms in the loss. By contrast, our approach leverages the \textit{Stein's derivative estimators} to compute the loss without back-propagation.}
    \vspace{-1mm}
    \label{fig:teaser}
\end{figure*}

\section{PRELIMINARY}
Without loss of generality, we formulate any partial differential equation as: 
\begin{align}
    \mathcal{L}u(x)&=\phi(x), \quad x\in\Omega\subset\R^d \label{eq:pde_govern}\\
    \mathcal{B}u(x)&=0, \quad x\in\partial\Omega, \label{eq:pde_boundary}
\end{align}
where $\mathcal{L}$ is the partial differential operator and $\mathcal{B}$ is the boundary condition. We use $x$ to denote the spatiotemporal-dependent variable, and use $u$ as the solution of the problem. 

\subsection{PINN Basics}
Physics-informed Neural Network (PINN) \cite{raissi2019physics} is a popular choice to learn the function $u(x)$ automatically by minimizing the Physics-informed loss function induced by the governing equation (\ref{eq:pde_govern}) and boundary condition (\ref{eq:pde_boundary}). 

To be concrete, given any neural network $u(x;\theta)$ with parameter $\theta\in\Theta$, we define the Physics-informed loss function as
\begin{eqnarray}\label{eq:pinn-loss}
    &&l_{\Omega}[\theta]=\| \mathcal{L}u(x;\theta)-\phi(x)\|^2_{L^2(\Omega)} \\
    &&l_{\partial\Omega}[\theta]=\| \mathcal{B}u(x;\theta)\|^2_{L^2(\partial\Omega)}\label{eq:pinn-loss2}.
\end{eqnarray}
The loss term $l_{\Omega}[\theta]$ in Eqn. (\ref{eq:pinn-loss}) corresponds to the PDE residual, which evaluates how $u(x;\theta)$ fits the partial differential equation, and $l_{\partial\Omega}[\theta]$ in Eqn. (\ref{eq:pinn-loss2}) corresponds to the boundary residual, which measures how well $u(x;\theta)$ satisfies the boundary condition. It is easy to see, if there exists $\theta^*$ that achieves zero loss in both residual terms $l_{\Omega}[\theta]$ and $l_{\partial\Omega}[\theta]$, then $u(x;\theta^*)$ will be a solution to the problem. 

To find $\theta^*$ efficiently, PINN approaches leverage gradient-based optimization methods towards minimizing a linear combination of the two losses defined above. As the domain $\Omega$ and its boundary $\partial\Omega$ are usually continuous, Monte Carlo methods are used to approximate $l_{\Omega}[u]$ and $l_{\partial\Omega}[u]$ in practice. As a consequence, the optimization problem will be defined as
\begin{equation}\label{eq:pinn-train1}
    \argmin_{\theta\in\Theta}\hat{l}_{\Omega}[\theta]+\lambda\hat{l}_{\partial\Omega}[\theta].
\end{equation}
In Eqn. (\ref{eq:pinn-train1}), 
$\hat{l}_{\Omega}[\theta]=\frac{1}{N_1}\sum_{i=1}^{N_1}\| \mathcal{L}u(x^{(i)})-\phi(x^{(i)})\|^2_{2}$ and $\hat{l}_{\partial\Omega}[\theta]=\frac{1}{N_2}\sum_{i=1}^{N_2}\| \mathcal{B}u(\tilde x^{(i)}) \|^2_{2}$,
where $\{x^{(1)},\cdots,x^{(N_1)}\}$ and $\{\tilde x^{(1)},\cdots,\tilde x^{(N_2)}\}$ are i.i.d sampled over $\Omega$ and $\partial\Omega$, $N_1$ and $N_2$ are the respective sample sizes, and $\lambda$ is the coefficient used to balance the interplay between the two loss terms.

\subsection{Inefficiency of PINN}
\label{sec:inefficiency_pinn}
It can be seen that the loss for PINN training, as shown in Eqn. (\ref{eq:pinn-train1}), is defined on the network's derivatives. Therefore, the computation of this loss requires multiple back-propagation steps, which can be very inefficient, especially for high-dimensional high-order PDE problems. In details, there are two sources of inefficiency in computing the PINN loss. The first one lies in the order-level inefficiency due to the fact that different orders of derivatives can only be calculated sequentially: One has to first build up the computational graph for the first-order derivatives and then perform back-propagation on this graph to obtain the second-order derivatives. The second source is the dimension-level inefficiency which is a major issue for high-order PDE problems. In modern deep learning frameworks like \texttt{PyTorch} \cite{paszke2019pytorch}, one has to perform back-propagation on the computational graph sequentially for each partial derivative to implement second-order operators like the Laplace operator, leading to a computational cost proportional to the dimensionality of the input \cite{pang2020efficient, meng2021estimating}.


Both of these two sources of inefficiency lead to a non-parallelizable training process, making the learning of PINN for high-dimensional high-order problems particularly slow. It can be easily obtained that for general $k$-order $d$-dimensional PDEs, the computational complexity of each training iteration of PINN using back-propagation is $\mathcal{O}(d^{k-1})$. Therefore, it becomes substantially beneficial to explore new parallelizable methods for PINN training since it may largely increase the training efficiency by making full use of advanced computing hardware, such as GPU.

\section{METHOD}
In this work, we propose a novel method which enables fully parallelizable PINN training. The key of our approach is using a specific formulation of $u(x;\theta)$:
\begin{equation}\label{eq:g-smooth}
    u(x;\theta)=\E_{\delta\sim \gN(0,\sigma^2 \mI)}f(x+\delta;\theta),
\end{equation}
where $f(\cdot;\theta)$ is a neural network with parameter $\theta$, and $\sigma$ is the noise level of the Gaussian distribution. When queried at $x$,  $u$ returns the expected output of $f$ when its input is sampled from a Gaussian distribution $\gN(x,\sigma^2 \mI)$ centered at $x$. It can be easily seen that $u(x;\theta)$ is a ``smoothed'' network constructed from a \textit{``base'' network} $f(x;\theta)$ by injecting Gaussian noise into the input, and we call $u(x;\theta)$ a \textit{Gaussian smoothed model}. For simplicity, we may omit $\theta$ and refer to the base network and the Gaussian smoothed model as $f(x)$ and $u(x)$ in the rest part of the paper.

\subsection{Back-propagation-free Derivative Estimators}
At first glance, using formulation (\ref{eq:g-smooth}) brings more difficulties since the output of $u$ can only be estimated by repeatedly sampling $\delta$. However, we show that the efficiency can be significantly improved during training since all derivatives, as derived by Stein's Identity, can be calculated in parallel without using back-propagation.
\begin{theorem}[\cite{stein1981estimation}]
\label{thm:main}
Suppose $x\in\R^d$. For any measurable function $f(x)$, define $u(x)=\E_{\delta\sim \gN(0,\sigma^2 \mI)}f(x+\delta)$, then we have 
$$\nabla_x u=\E_{\delta\sim \gN(0,\sigma^2 \mI)}[\frac{\delta}{\sigma^2}f(x+\delta)].$$
\end{theorem}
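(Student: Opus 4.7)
The plan is to rewrite $u(x)$ as a convolution integral, perform a change of variables so that the dependence on $x$ is moved entirely into the Gaussian density, and then differentiate under the integral sign. This route avoids assuming any differentiability of $f$, which matches the hypothesis that $f$ is merely measurable.

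Concretely, let $\varphi_\sigma(\delta) = (2\pi\sigma^2)^{-d/2}\exp(-\|\delta\|^2/(2\sigma^2))$ denote the density of $\gN(0,\sigma^2\mI)$. First I would write
\[
u(x)=\int_{\R^d} f(x+\delta)\,\varphi_\sigma(\delta)\,d\delta
     =\int_{\R^d} f(y)\,\varphi_\sigma(y-x)\,dy,
\]
using the substitution $y=x+\delta$ (whose Jacobian is $1$). In this second form, $x$ appears only in the Gaussian factor. Since $\nabla_\delta \varphi_\sigma(\delta)=-(\delta/\sigma^2)\varphi_\sigma(\delta)$, we have $\nabla_x \varphi_\sigma(y-x)=\tfrac{y-x}{\sigma^2}\varphi_\sigma(y-x)$. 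Assuming we may exchange gradient and integral, this gives
\[
\nabla_x u(x)=\int_{\R^d} f(y)\,\frac{y-x}{\sigma^2}\,\varphi_\sigma(y-x)\,dy
            =\E_{\delta\sim\gN(0,\sigma^2\mI)}\!\left[\frac{\delta}{\sigma^2}f(x+\delta)\right],
\]
after substituting $\delta=y-x$ back in the last step, which is exactly the claim.

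The one real technical point is justifying the interchange of $\nabla_x$ with the integral. I would invoke a standard Leibniz/dominated-convergence argument: for any compact neighborhood of a fixed $x$, the integrand $f(y)\,\tfrac{y-x}{\sigma^2}\varphi_\sigma(y-x)$ is dominated in absolute value by $|f(y)|\cdot g(y)$ for some integrable Gaussian-type majorant $g$, provided $f$ has at most polynomial growth (or more generally satisfies the mild integrability condition that makes all of these integrals absolutely convergent). Under such a growth assumption, which is harmless for any neural network $f$ with the activations used in practice, differentiation under the integral is legitimate and the proof is complete. The only subtle point in an otherwise routine calculation is checking this dominated-convergence hypothesis; everything else reduces to the explicit gradient of a Gaussian density.
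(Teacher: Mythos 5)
Your proof is correct and follows essentially the same route as the paper's: rewrite $u$ as a convolution so that $x$ sits only in the Gaussian density, differentiate under the integral sign using $\nabla_x \varphi_\sigma(y-x)=\frac{y-x}{\sigma^2}\varphi_\sigma(y-x)$, and change variables back. The only difference is that you explicitly flag the dominated-convergence justification for the interchange (which the paper silently assumes), a point in your favor.
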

From the above theorem, we can see that the first-order derivative $\nabla_x u$ can be reformulated as an expectation term $\E_{\delta\sim \gN(0,\sigma^2 \mI)}[\frac{\delta}{\sigma^2}f(x+\delta)]$. To calculate the value of the expectation, we can use Monte Carlo method to obtain an unbiased estimation from $K$ i.i.d Gaussian samples, i.e.,
\begin{equation}
    \label{eq:monte-carlo}
    \nabla_x u\approx \frac{1}{K}\sum^K_{k=1}\frac{\delta_k}{\sigma^2}f(x+\delta_k),
\end{equation} 
where $\delta_k\sim \gN(0,\sigma^2 \mI)$, $k=1,...,K$.

It is easy to check that Stein's Identity can be extended to any order of derivatives by recursion. Here we showcase the corresponding identity for Hessian matrix and Laplace operator:
\begin{compactitem}
\item Hessian matrix $\boldsymbol{H}u$: 
\begin{equation*}
    \boldsymbol{H}u =\E_{\delta\sim \gN(0,\sigma^2\mI)}\left[\left(\frac{\delta\delta^{\top}-\sigma^2\mI}{\sigma^4}\right)f(x+\delta)\right].
\end{equation*}

\item Laplace operator $\Delta u$: 
\begin{equation*}
    \Delta u =\E_{\delta\sim \gN(0,\sigma^2\mI)}\left[\left(\frac{\|\delta\|^2-\sigma^2d}{\sigma^4}\right)f(x+\delta)\right].
\end{equation*}
\end{compactitem}

For convenience, we refer to the Monte Carlo estimators based on these identities as \emph{vanilla Stein's derivative estimators}. 

We can plug the Stein's derivative estimators into the physics-informed loss functions (Eqn. \ref{eq:pinn-train1}) defined for a given PDE. In this way, we are able to overcome the two sources of inefficiency in training PINN mentioned earlier: We can see that Stein's derivative estimators for higher-order derivatives no longer require pre-computation for lower-order ones, and the derivatives with respect to each dimension can be obtained in a forward pass \textit{simultaneously} instead of computing them dimension by dimension \textit{sequentially}. Therefore, we can utilize the parallel computing power of GPU resource better and achieve $O(1)$ time complexity. See Figure \ref{fig:teaser} for an illustration of our approach.

These properties of Stein's derivative estimators are appealing because they enable a fully-parallelizable PINN training and lead to significant improvement over efficiency for solving high-dimensional PDEs.  
In the meantime, a natural concern about the deficient expressiveness of Gaussian smooth function may rise. 
In the following subsection, we will take deep discussion on this issue and demonstrate the importance of $\sigma$ in practice to control the model expressiveness.


\subsection{Model Capacity}
\label{sec:model_capacity}
In our method, Gaussian smoothed neural network is used instead of vanilla neural network as the solution of PDE. This modification brings a natural concern, i.e., whether the function space of the proposed Gaussian smoothed models is expressive enough to approximate the solutions of a given PDE problem. In this subsection, we show that the capacity of Gaussian smoothed neural networks is closely related to the Lipschitz function class according to the following theoretical result:
\begin{theorem}
\label{lem:lip}
For any measurable function $f:\R^d \rightarrow \R$, define $u(x)=\E_{\delta \sim \gN(0,\sigma^2 \mI)}f(x+\delta)$, then $u(x)$ is $\frac{F}{\sigma}\sqrt{\frac{2}{\pi}}$-Lipschitz with respect to $\ell_2$-norm, where $F=\sup\limits_{x \in \R^d} |f(x)|$.
\end{theorem}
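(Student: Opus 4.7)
The plan is to bound the gradient of $u$ pointwise and then integrate along line segments. The gradient is explicitly available from Stein's identity (Theorem~\ref{thm:main}), which writes $\nabla_x u(x) = \E_{\delta \sim \gN(0,\sigma^2 \mI)}\left[\tfrac{\delta}{\sigma^2} f(x+\delta)\right]$. So the goal reduces to bounding $\|\nabla_x u(x)\|_2$ uniformly by $\tfrac{F}{\sigma}\sqrt{\tfrac{2}{\pi}}$.

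To do this, I would use the variational formula $\|\nabla_x u(x)\|_2 = \sup_{\|v\|_2 = 1} v^\top \nabla_x u(x)$. Fixing such a unit vector $v$, I pull $v^\top$ inside the expectation to get
\begin{equation*}
v^\top \nabla_x u(x) \;=\; \E_{\delta}\!\left[\tfrac{v^\top \delta}{\sigma^2}\, f(x+\delta)\right].
\end{equation*}
Since $|f(x+\delta)| \le F$ pointwise, absolute values give $|v^\top \nabla_x u(x)| \le \tfrac{F}{\sigma^2}\, \E_{\delta}|v^\top \delta|$. The scalar $v^\top \delta$ is distributed as $\gN(0,\sigma^2)$ because $\|v\|_2 = 1$, so its mean absolute value is $\sigma\sqrt{2/\pi}$. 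Plugging this in yields $\|\nabla_x u(x)\|_2 \le \tfrac{F}{\sigma}\sqrt{2/\pi}$ uniformly in $x$.

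Finally, to pass from a gradient bound to a Lipschitz bound, I would observe that $u$ is smooth (convolution of a bounded measurable function with a Gaussian density is $C^\infty$), so for any $x,y \in \R^d$ the fundamental theorem of calculus along the segment $x + t(y-x)$ gives $u(y) - u(x) = \int_0^1 \nabla u(x + t(y-x))^\top (y-x)\, dt$, and Cauchy--Schwarz combined with the gradient bound yields $|u(y) - u(x)| \le \tfrac{F}{\sigma}\sqrt{2/\pi}\,\|y-x\|_2$.

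I do not expect any serious obstacle: the only subtle points are justifying differentiation under the integral sign (standard, since the Gaussian density and its derivatives are integrable and $f$ is bounded and measurable) and verifying that Stein's identity applies to merely measurable bounded $f$ rather than smooth $f$. The latter follows by moving the derivative onto the Gaussian density in the convolution representation $u(x) = \int f(z)\, \varphi_\sigma(z - x)\, dz$, which is the same derivation that proves Theorem~\ref{thm:main} in its general form.
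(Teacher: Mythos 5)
Your proposal is correct and follows essentially the same route as the paper: apply Stein's identity (Theorem~\ref{thm:main}) to express $\nabla_x u$, bound $|\alpha^{\top}\nabla_x u|$ for a unit vector $\alpha$ by $\frac{F}{\sigma^2}\E|\alpha^{\top}\delta|$, and use $\alpha^{\top}\delta\sim\gN(0,\sigma^2)$ to obtain $\frac{F}{\sigma}\sqrt{2/\pi}$. You additionally spell out the passage from the uniform gradient bound to the Lipschitz property and the justification for differentiating under the integral, both of which the paper leaves implicit; these are correct and harmless additions.
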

\begin{proof}
Theorem \ref{thm:main} states that 
\begin{equation*}
    \nabla_x u = \E \left[\frac{\delta}{\sigma^2}f(x+\delta)\right].
\end{equation*}

Thus, for any unit vector $\alpha$, we have 
\begin{equation*}
    |\alpha^{\top} \nabla_x u |\leq \frac{\E\left[|\alpha^{\top} \delta f(x+\delta)|\right]}{\sigma^2} \leq  \frac{F\E\left[|\alpha^{\top} \delta|\right]}{\sigma^2}=\frac{F}{\sigma}\sqrt{\frac{2}{\pi}}.
\end{equation*}

The last equality holds since $\alpha^{\top} \delta\sim\gN(0,\sigma^2)$, which concludes the proof.
\end{proof}

Theorem \ref{lem:lip} can be viewed as a negative result on the capacity of the Gaussian smoothed models. It indicates that the noise level $\sigma$ is an important hyper-parameter that controls the expressive power of the model $u(x)$. For example, if the neural network $f$ uses $\mathrm{tanh}$ activation in the final prediction layer, its output range will be restricted to $(-1,1)$. From Theorem \ref{lem:lip}, it is straightforward to see that the Lipschitz constant of $u$ is no more than $\frac{1}{\sigma}\sqrt{\frac{2}{\pi}}$ no matter how complex the network $f$ is. In this setting, if we have a prior that the solution of a PDE has a large Lipschitz constant, we have to choose a small value of $\sigma$ to approximate it well. On the other hand, using small $\sigma$ would affect the finite-sample approximation error of vanilla Stein's derivative estimators. Without further assumptions on $f$, it is easy to check that the variance of vanilla Stein's derivative estimators in Eqn. (\ref{eq:monte-carlo}) can be inversely proportional to $\sigma$. Thus, naively using Monte Carlo method requires a large sample size for small $\sigma$, which may even slow down the training in practice. In the next subsection, we present several variance reduction approaches that we find particularly useful during training.


\subsection{Variance-Reduced Stein's Derivative Estimators}
\label{sec:var_reduction}
We mainly use two methods to reduce the variance for Stein's derivative estimators, the control variate method and the antithetic variable method. For simplicity, we demonstrate how to apply the two techniques to improve the estimator of $\nabla_x u$ and $\Delta u$, which can be easily extended to other Stein's derivative estimators.

\textbf{The control variate method.} One generic approach to reducing the variance of Monte Carlo estimates of integrals is to use an additive control variate \cite{evans2000approximating, fishman2013monte, hammersley2013monte}, which is known as \emph{baseline}. In our problem, we find $f(x)$ is a proper baseline which can lead to low-variance estimates of the derivative:
\begin{align}
    \nabla_x u=&\E_{\delta\sim \gN(0,\sigma^2 \mI)}\left[\frac{\delta}{\sigma^2}(f(x+\delta)-f(x))\right]\nonumber\\
    \approx&\frac{1}{K}\sum^K_{k=1}\left[\frac{\delta_k}{\sigma^2}(f(x+\delta_k)-f(x))\right]; \label{eqn:control-variates}\\
    \Delta u=&\E\left[\left(\frac{\|\delta\|^2-\sigma^2d}{\sigma^4}\right)(f(x+\delta)-f(x))\right]\nonumber\\
    \approx&\frac{1}{K}\sum^K_{k=1}\left[\left(\frac{\|\delta_k\|^2-\sigma^2d}{\sigma^4}\right)(f(x+\delta_k)-f(x))\right],\label{eqn:control-variates2}
\end{align}
where $\delta_k$ are i.i.d. samples from $\gN(0,\sigma^2 \mI)$. To see clearly why this technique leads to variance reduction, we take Eqn. (\ref{eqn:control-variates}) as an example. we assume $f(x)$ is smooth leverage its Taylor expansion at $x$ to rewrite $\frac{\delta }{\sigma^2}(f(x+\delta)-f(x))$ as $\frac{\delta }{\sigma^2}\left(\delta^{\top}\nabla f(x)+o(\|\delta\|)\right)$. This expression can be further simplified to $\xi\xi^{\top} \nabla f(x)+o(1)$, where $\xi=\delta/\sigma \sim \gN(0,\mI)$. Therefore, the variance of the estimator in Eqn. (\ref{eqn:control-variates}) will be \textit{independent} of $\sigma$. This fact is in sharp contrast to the original estimator provided in Eqn. (\ref{eq:monte-carlo}).
With a similar argument, we can also show that the variance of the estimator in Eqn. (\ref{eqn:control-variates2}) is inversely of proportional to $\sigma$, while the the variance of the original estimator for $\Delta u$ is inversely proportional to $\sigma^2$. 

\textbf{Further improvement using the antithetic variable method.} The antithetic variable method is yet another powerful technique for variance reduction \cite{hammersley1956new}. By using the symmetry of Gaussian distribution, it's easy to see that Eqn. (\ref{eqn:control-variates}) and (\ref{eqn:control-variates2}) still holds when $\delta$ is substituted with $-\delta$, which leads to new estimators. Averaging the new estimator and the one in Eqn. (\ref{eqn:control-variates}) / (\ref{eqn:control-variates2}) gives the following result:
\begin{align}
    \nabla_x u=&\E\left[\frac{\delta}{2\sigma^2}(f(x+\delta)-f(x-\delta))\right]\nonumber\\
    \approx&\frac{1}{K}\sum^K_{k=1}\left[\frac{\delta_k}{2\sigma^2}(f(x+\delta_k)-f(x-\delta_k))\right];\label{eqn:antithetic-variable}
\end{align}
{\small
\begin{align}
    \Delta u=&\E\left[\left(\frac{\|\delta\|^2-\sigma^2d}{2\sigma^4}\right)(f(x+\delta)+f(x-\delta)-2f(x))\right]\nonumber\\
    \approx&\frac{1}{K}\sum^K_{k=1}\left[\left(\frac{\|\delta_k\|^2-\sigma^2d}{2\sigma^4}\right)\right.\cdot \nonumber\\
    &\qquad\quad\left.(f(x+\delta_k)+f(x-\delta_k)-2f(x))\right], \label{eqn:antithetic-variable2}
\end{align}
}

Again, by leveraging the Taylor expansion of $f(x)$, one can show that the variances of the estimators in Eqn. (\ref{eqn:antithetic-variable}) and (\ref{eqn:antithetic-variable2}) are both \textit{independent} of $\sigma$. For example, $\left(\frac{\|\delta\|^2-\sigma^2d}{2\sigma^4}\right)(f(x+\delta)+f(x-\delta)-2f(x))=\left(\frac{\|\delta\|^2-\sigma^2d}{2\sigma^4}\right)(\delta^{\top}\boldsymbol{H}f(x) \delta + o(\|\delta\|^2))$, where $\boldsymbol{H}f(x)$ denotes the Hessian matrix of $f$ at $x$. By letting $\xi=\delta/\sigma \sim \gN(0,\mI)$, the estimator is simplified to $(\|\xi\|^2-d) (\xi^{\top}\boldsymbol{H}f(x) \xi + o(1))$. Thus, its variance is \textit{independent} of $\sigma$.
This property is especially appealing because it enables us to tune model expressiveness according to PDE complexity in practice. For instance, we can use a small $\sigma$ to ensure the model is expressive enough to learn a complex PDE solution. We also provide empirical comparisons between vanilla Stein's derivative estimators and the improved ones. See Section \ref{sec:ablate} for details. 

Note that Eqn. (\ref{eqn:control-variates}) and (\ref{eqn:antithetic-variable2}) look very similar to numerical differentiation in the surface form but they yield substantial differences. First, the roles of the term $f(x)$ are different. In Eqn. (\ref{eqn:control-variates}) and (\ref{eqn:antithetic-variable2}), the term $f(x)$ is introduced as the baseline which doesn't change the value of the expectation since $\E\left[\frac{\delta}{\sigma^2}f(x)\right]=\E\left[\left(\frac{\|\delta\|^2-\sigma^2d}{\sigma^4}\right)f(x)\right]=0$. Therefore, multiplying any constant to $f(x)$ also holds, which will be infeasible for numerical differentiation. Second, our method provides \textit{unbiased} estimation of the derivatives while numerical method can only obtain \textit{biased} derivatives due to truncation error. 

\begin{table*}[t]
\caption{\textbf{Experimental results of solving Poisson's Equation (left) and HJB Equation (right).} ``PINN'' refers to the original PINN approach with stacked auto-differentiation. ``Ours'' refers to our new method using Gaussian smoothed models and Stein's derivative estimators. We report $L^1$ and $L^2$ relative error of the learned solutions, as well as the training time, to compare these two approaches.}
\label{tab:exp-main}
\centering
\vspace{3mm}
\begin{tabular}{ccccc}
    \toprule
    Problem &Method & $L^1$ error & $L^2$ error   & Training time \\ \midrule
    \multirow{2}{*}{\textit{2-dimensional Poisson's Equation}} 
    &PINN   & $0.13\%$ & $0.15\%$ & $18.96$s       \\ 
    &Ours   & $0.16\%$ & $0.19\%$ & $28.44$s       \\ \midrule
    \multirow{2}{*}{\textit{100-dimensional Heat Equation}} 
    &PINN   & $0.52\%$ & $0.60\%$ & $2.35$min       \\ 
    &Ours   & $0.53\%$ & $0.63\%$ & $0.83$min       \\ \midrule
    \multirow{2}{*}{\textit{250-dimensional HJB Equation}} 
    &PINN + adv train  & $0.95\%$ & $1.18\%$ & $38.16$h        \\ 
    &Ours + adv train  & $0.91\%$ & $1.37\%$ & $12.07$h       \\ \bottomrule
\end{tabular}
\end{table*}

\begin{figure*}[ht]
\begin{minipage}{0.3\linewidth}
    \centering
    \includegraphics[width=1\linewidth]{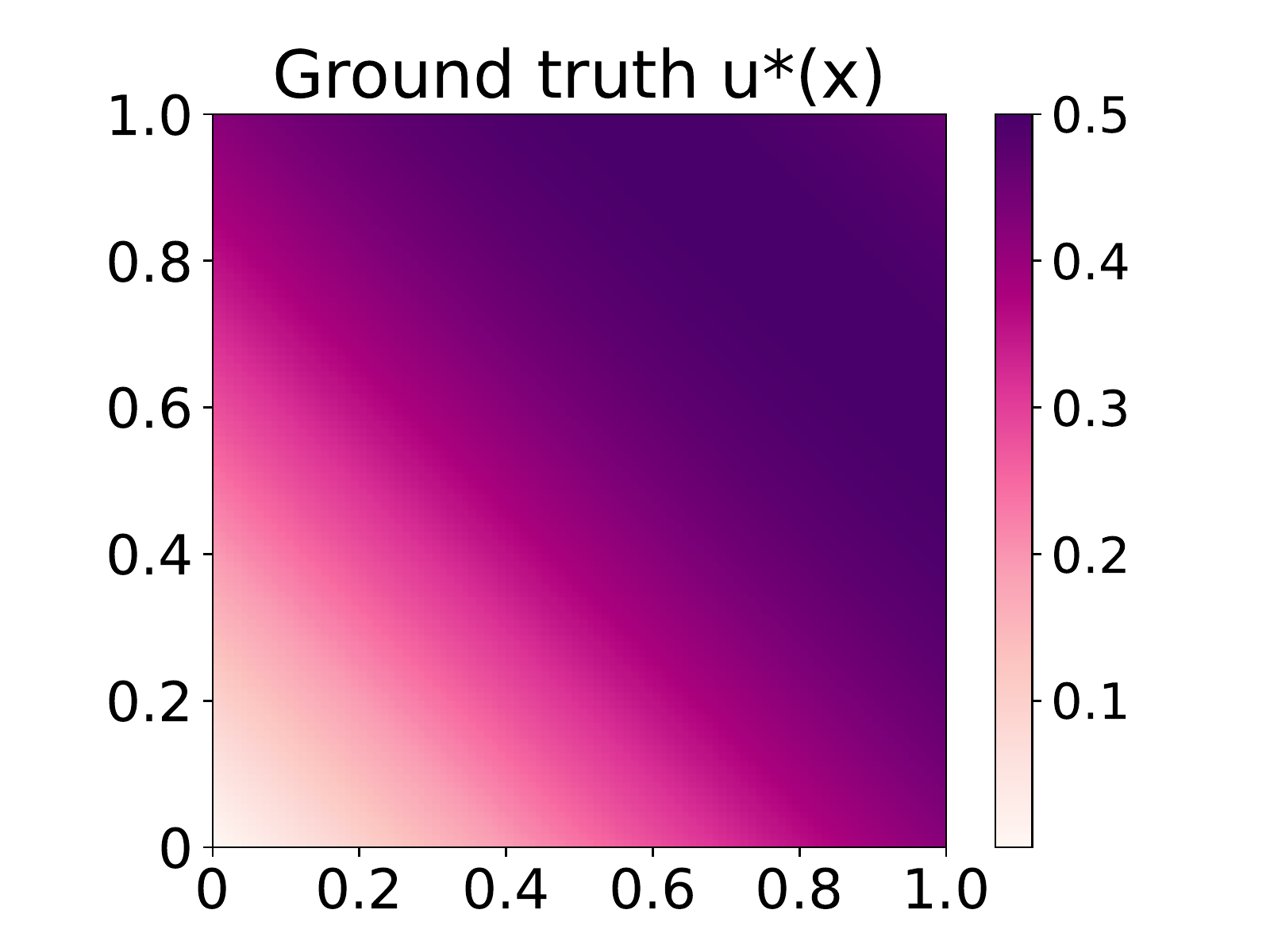}
\end{minipage}\hfill
\begin{minipage}{0.3\linewidth}
    \centering
    \includegraphics[width=1\linewidth]{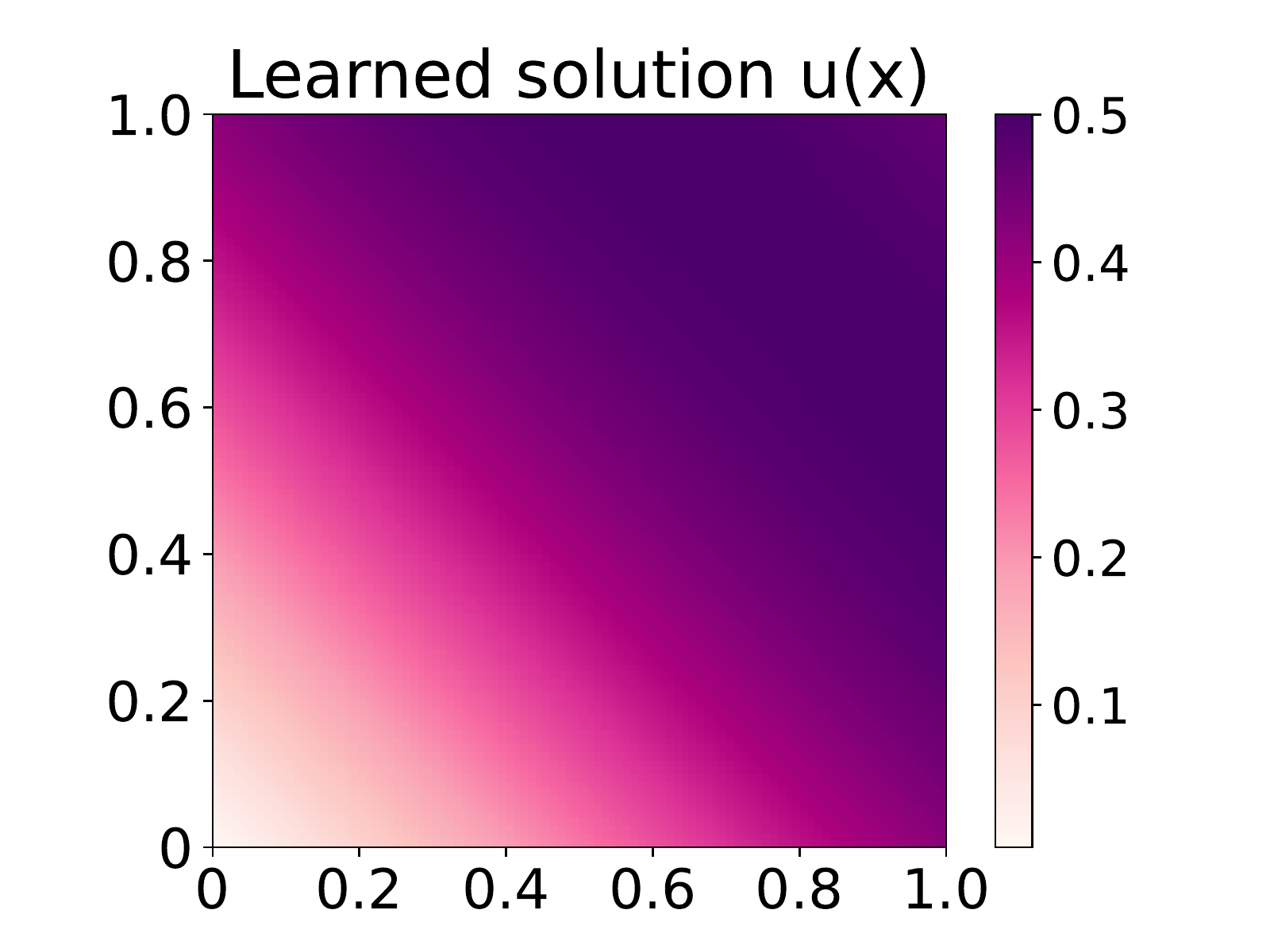}
\end{minipage}\hfill
\begin{minipage}{0.3\linewidth}
    \centering
    \includegraphics[width=1\linewidth]{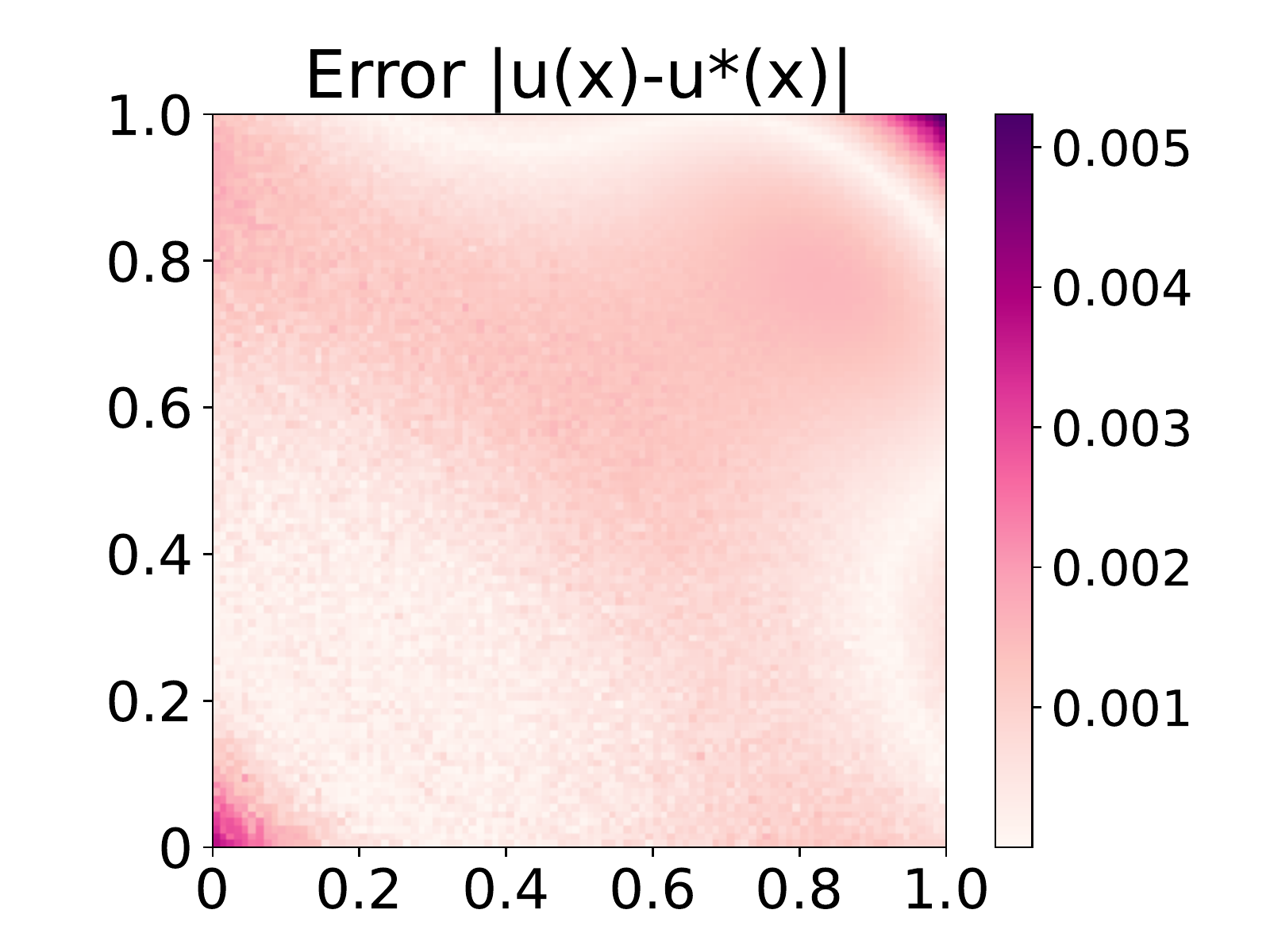}
\end{minipage}
\caption{\textbf{Visualization for two-dimensional Poisson's Equation}. Three subfigures show the ground truth $u^*(x)$, the learned solution $u(x)$ using our method, and the point-wise error $|u(x)-u^*(x)|$ respectively.}
\label{fig:possion}
\end{figure*}

\section{EXPERIMENTAL RESULTS}
\label{sec:exp}

We conduct experiments to verify the effectiveness of our approach on a variety of PDE problems. Ablation studies on the design choices and hyper-parameters are then provided. Our codes are implemented based on \texttt{PyTorch} \cite{paszke2019pytorch}. All the models are trained on one NVIDIA GeForce RTX 2080 Ti GPU with 11GB memory and the reported training time is also measured on this machine. Our code can be found in the supplementary material.

\subsection{Low-dimensional Problems}
\label{sec:poisson}
We first showcase our approach on two-dimensional PDE problems with visualization. In particular, we study the following two-dimensional Poisson's Equation with Dirichlet boundary conditions:
\begin{equation*}
\left\{
\begin{array}{ll}
    \Delta u(x) = g(x) & x\in \Omega \\
    u(x) = h(x) & x \in \partial \Omega
\end{array}
\right.
\end{equation*}

In our experiment, we set $\Omega=[0,1]^2$, $g(x)=-\sin(x_1+x_2)$ and $h(x)=\frac{1}{2}\sin(x_1+x_2)$. This PDE has a unique solution $u^*(x)=\frac{1}{2}\sin(x_1+x_2)$. 

We train a Gaussian-smoothed model to fit the solution with our method. Specifically, the base neural network of our model is a 4-layer MLP with 200 neurons and $\mathrm{tanh}$ activation in each hidden layer. In our Gaussian smoothed model, the noise level $\sigma$ is set to $0.1$, and the number of samples $K$ is set to 2048. We use the variance-reduced derivative estimator in Eqn. (\ref{eqn:antithetic-variable2}) based on the control variate method and the antithetic variable method.
To train the model, we use Adam as the optimizer \cite{kingma2015adam}. The learning rate is set to $3\mathrm{e}-4$ in the beginning and then decays linearly to zero during training. In each training iteration, we sample $N_1=100$ points from the domain $\Omega$ and $N_2=100$ points from the boundary $\partial \Omega$ to obtain a mini-batch. We train the model for 1000 iterations. We use the variance-reduced derivative estimator in Eqn. (\ref{eqn:antithetic-variable}) and (\ref{eqn:antithetic-variable2}) based on the control variate method and the antithetic variable method to compute the loss during training.

We compare our method with original PINN trained with stacked automatic differentiation \cite{raissi2019physics}. The PINN baseline model is trained using the same set of hyperparameters on the same machine to ensure a fair comparison. Evaluations are performed on a hold-out validation set which is unseen during training. We measure the accuracy of the learned solution by calculating the $L^1$ and $L^2$ relative error in the domain $\Omega$. We also report the training time to study the efficiency in practice. The reported results are averaged across 3 different runs.

\paragraph{Experimental Results.} The experimental results are summarized in Table \ref{tab:exp-main}. After training, our model reaches an $L^1$ relative error of $0.16\%$, indicating that the model fits the ground truth well. Furthermore, the final relative error of our model is very close to that of PINN, e.g., $0.13\%$ v.s. $0.16\%$ in terms of $L^1$ relative error. This result demonstrates that using Gaussian smoothed neural networks and Stein's derivative estimators does little harm to the accuracy of the model. Besides, one can notice that the training time of our model is slightly longer than that of PINN. We point out that this is because stacked back-propagation is especially time-consuming for \textit{high dimensional problem}, while this experiment targets a two-dimensional PDE. In this case, removing the need for stacked back-propagation does not necessarily lead to speed-up, and this experiment is not a favorable setting for our method. That being said, this experiment still shows that using Gaussian smoothed neural networks and Stein's derivative estimators is feasible in PINN training.

We also examine the quality of the learned solution $u(x)$ by visualization. Figure \ref{fig:possion} shows the ground truth $u^*(x)$, the learned solution $u(x)$ and the point-wise error $|u(x)-u^*(x)|$. We can see that the learned solution and the ground truth are very similar. Quantitatively speaking, the third subfigure in Figure \ref{fig:possion} shows that the point-wise error of our model is less than $2\mathrm{e}-3$ for most areas, thus the model can approximate the solution to Poisson's Equation with high accuracy when using our method.

\subsection{High-dimensional Heat Equation}
\label{sec:heat}
We consider the high-dimensional Heat Equation to demonstrate the speed and accuracy of our proposed method in solving high-dimensional PDE. Heat Equation is a prototypical parabolic PDE \cite{evans1998partial}, which is deeply connected to several domains including probability theory \cite{lawler2010random}, image processing \cite{aubert2006mathematical}, and quantum mechanics \cite{griffiths2018introduction}. In this experiment, we study the following $N$-dimensional Heat Equation:
\begin{equation*}
\left\{
\begin{array}{ll}
    u_t(x,t)=\Delta u(x,t) & x\in B(0,1), t\in (0,1)\\
    u(x,0) = \|x\|^2/2N & x \in B(0,1) \\
    u(x,t) = t+1/2N & x \in \partial B(0,1), t\in[0,1]
\end{array}
\right.
\end{equation*}

In the above equation, $x\in\sR^N$ is the spatial variable, while $t>0$ is the temporal variable, which is slightly different from the notation in Eqn. (\ref{eq:pde_govern}) where $x$ is denotes as the spatiotemporal-dependent variable in the PDE. We hope this abuse of notations does not confuse the readers. $B(0,1)$ denotes the unit ball in $N$-dimensional space. $\Delta$ denotes the Laplacian operator with respect to the spatial variable $x$. The solution to the above equation is $u(x,t)=t+\|x\|^2/2N$. In our experiments, we focus on the high-dimensional setting and set the problem dimensionality $N$ to $100$. 

Similar to Section \ref{sec:poisson}, We compare our method with original PINN trained with stacked automatic differentiation \cite{raissi2019physics}. The baseline and our model share the same set of hyperparameters, and are trained on the same machine. Evaluations are performed on a hold-out validation set which is unseen during training. We report the accuracy of the learned solution by calculating the $L^1$ and $L^2$ relative error in the domain $B(0,1)\times[0,1]$, as well as the training time.

We train a Gaussian-smoothed model to fit the solution, where the base neural network of our model is a 4-layer MLP with 256 neurons and $\mathrm{tanh}$ activation in each hidden layer. In our Gaussian smoothed model, the noise level $\sigma$ is set to $0.01$, and the number of samples $K$ is set to 2048. We use the variance-reduced derivative estimator in Eqn. (\ref{eqn:antithetic-variable2}) based on the control variate method and the antithetic variable method. The learning rate is set to $1\mathrm{e}-3$ in the beginning and then decays linearly to zero during training. Other training details can be found in the appendix.

\textbf{Experimental Results.}  The experimental results are summarized in the middle of Table \ref{tab:exp-main}. Our model is as accurate as the PINN baseline. For example, our model and the original PINN obtain $0.53\%$ and $0.52\%$ $L^1$ relative error on the test set, respectively. Besides, our model is significantly more efficient than the PINN baseline, achieving $2.83\times$ acceleration in this problem. We emphasize that our model and the baseline are trained for \textit{the same number of iterations}. Therefore, our technique clearly brings noticeable efficiency gains by avoiding stacked back-propagation.

\subsection{High-dimensional Hamilton-Jacobi-Bellman Equation}
\label{sec:hjb}
We further use the high-dimensional Hamilton-Jacobi-Bellman (HJB) Equation to showcase the effectiveness of our proposed method in solving complicated non-linear high-dimensional PDE. HJB Equation is one of the most important non-linear PDE in optimal control theory. It has a wide range of applications, including physics \cite{sieniutycz2000hamilton}, biology \cite{li2011inverse}, and finance \cite{pham2009continuous}. Its discrete-time counterpart is the Bellman Equation widely used in reinforcement learning \cite{sutton2018reinforcement}.

\textbf{Experimental Design.} Following existing works \cite{han2018solving, wang2022is}, we study the classical linear-quadratic Gaussian (LQG) control problem in $N$ dimensions, whose HJB Equation is a second-order PDE\footnote{Similar to the Heat Equation in Section \ref{sec:heat}, $x$ denotes the state variable and $t$ denotes the temporal-dependent variable here.} defined as below:
\begin{equation}
\left\{
\begin{array}{ll}
    u_t + \Delta u -\mu \|\nabla_x u\|^2 = 0 & x \in \sR^N, 
    t \in [0, T] \\
    u(T, x) = g(x) & x \in \sR^N
\end{array}
\right.
\label{eq:lqg}
\end{equation}

As is shown in \cite{han2018solving}, there is a unique solution to Eqn. (\ref{eq:lqg}):
\begin{equation*}
    u(x,t) = -\frac{1}{\mu(2\pi)^{\frac{n}{2}}} \ln\int_{\R^n}  \mathrm{e}^{-\frac{\|y\|^2}{2}-\mu g(x-\sqrt{2(T-t)}y)} \mathrm{d} y.
\end{equation*}

We set the parameters $\mu=1$, $T=1$, and the terminal cost function $g(x)=\ln\left(\dfrac{1+\|x\|^2}{2}\right)$. To evaluate the training speed and performance in high dimensional cases, we set the problem dimensionality $N$ to $250$. 

We compare our method with the sate-of-the-art PINN-based approach on HJB Equation \cite{wang2022is}. Specifically, \cite{wang2022is} shows that original PINN training algorithm cannot guarantee to learn an accurate solution to a large class of HJB Equation, and propose to use adversarial training for PINN to learn the solution with theoretical guarantee. The resulting training algorithm is powerful yet time-consuming because it involves both stacked back-propagation for high-dimensional function and additional computation over-head in adversarial training. In this experiment, we follow \cite{wang2022is} to apply adversarial training, while removing stacked back-propagation with our method. 

We train a Gaussian-smoothed model to fit the solution, where the base neural network of our model is a 4-layer MLP with 768 neurons and $\mathrm{tanh}$ activation in each hidden layer. 
The noise level $\sigma$, the number of samples $K$, and the derivative estimators are the same as those in Section \ref{sec:heat}.
The learning rate is set to $2\mathrm{e}-4$ in the beginning and then decays linearly to zero during training. Other training settings, including the batch size, the total iterations, etc., are the same as those in \cite{wang2022is}, and the details can be found in the appendix.


\textbf{Experimental Results.}  The experimental results are summarized in the bottom of Table \ref{tab:exp-main}, where the performance of the baseline ``PINN + adv train'' is reported in \cite{wang2022is}. It's clear that the accuracy of our model is on par with the state-of-the-art result, showing that using the Gaussian smoothed model and approximated derivatives does not hurt the model performance. Besides, our model and the baseline are trained for the same number of iterations, and the reported training time indicates that our method is much more efficient compared with stacked back-propagation in high-dimensional problem. To be specific, our method is $3.16\times$ faster than the baseline, which largely reduces the training cost.

These observations clearly demonstrates that our method can significantly accelerate the training for high-dimensional PDE problems without sacrificing performance. We believe this is an initial step towards efficiently solving high-dimensional PDEs using deep learning approaches. 
\begin{figure*}[t]
\centering
\begin{minipage}{0.37\linewidth}
    \centering
    \includegraphics[width=0.85\linewidth]{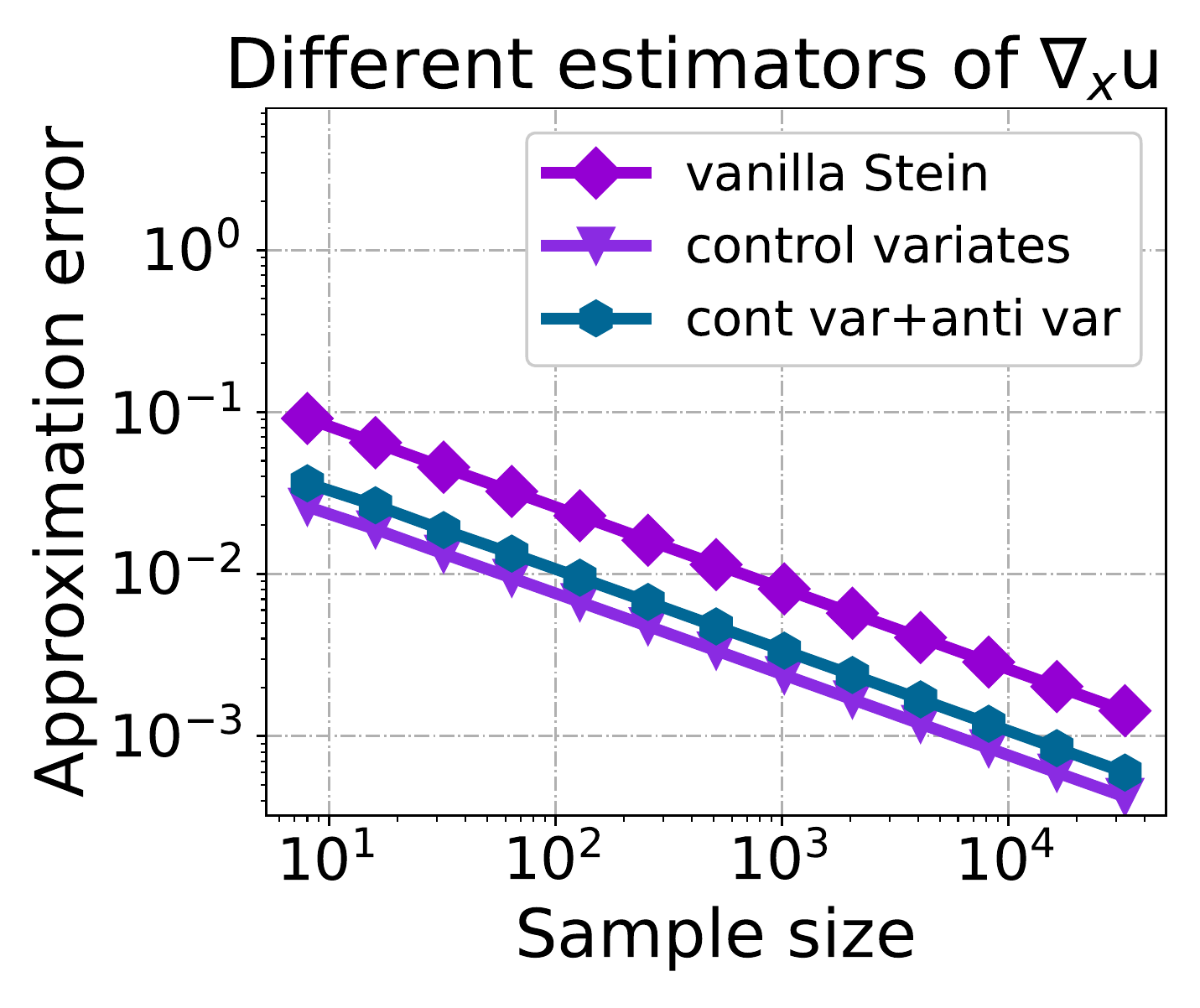}
\end{minipage}
\begin{minipage}{0.37\linewidth}
    \centering
    \includegraphics[width=0.85\linewidth]{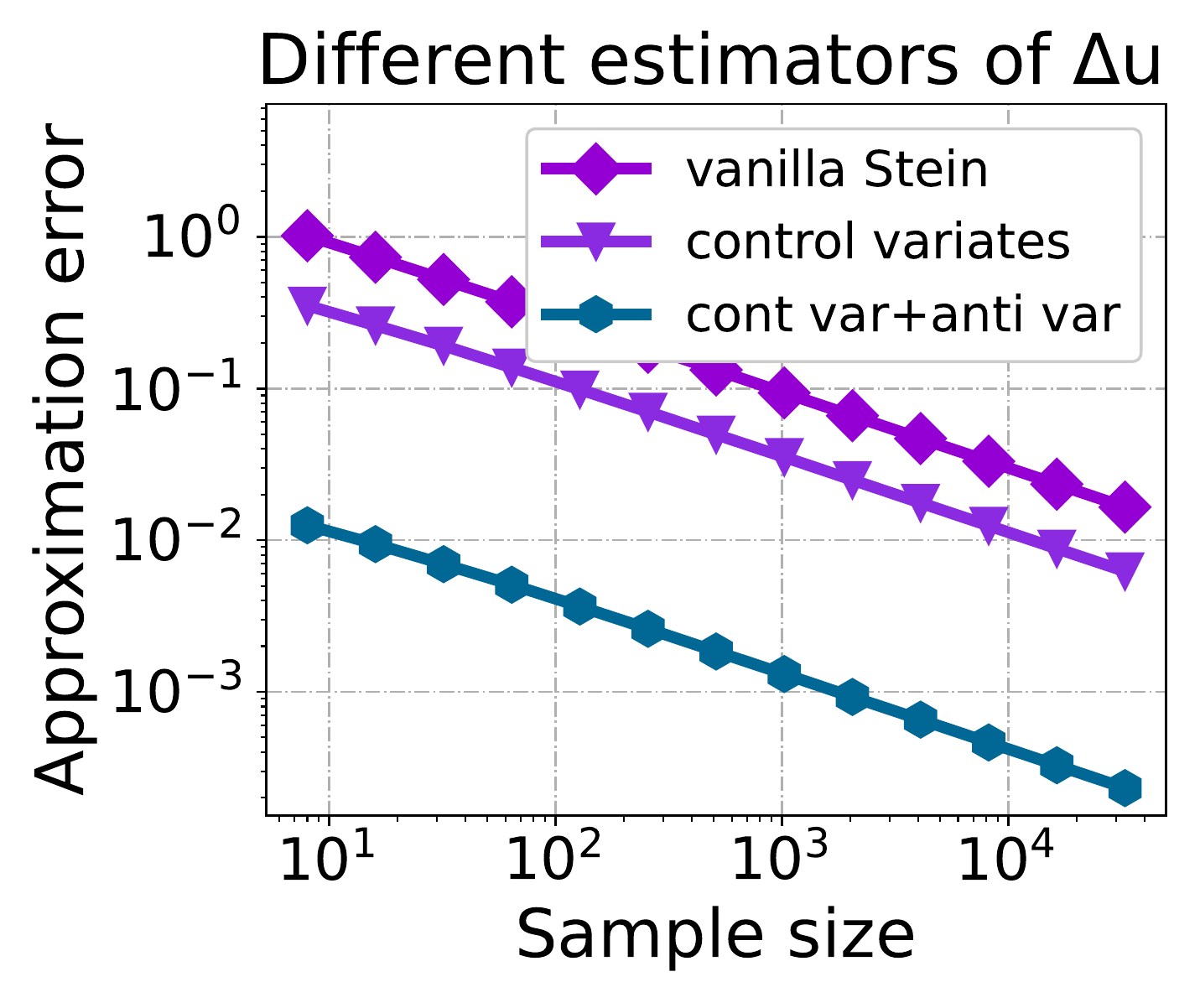}
\end{minipage}
\vspace{-3mm}
\caption{\textbf{Numerical results on the approximation error}. The two subfigures show the approximation error of different estimators for the gradient (left) and the Laplace operator (right), where the $x$-axis indicates the logarithmic-scaled sample sizes and the $y$-axis indicates the logarithmic-scaled approximation error. }
\label{fig:ablate}
\end{figure*}

\subsection{Ablation studies}
\label{sec:ablate}

\begin{table*}[t]

\caption{\textbf{Experimental results for ablation studies.} We experiment on the 250-dimensional HJB Equation using different hyperparameters. The left table investigates the impact of the noise level $\sigma$ on the final performance. The right table compare the performances of our model trained with different sample size $K$ in the Stein's derivative estimator.}
\label{tab:ablate}
\vspace{0.3cm}
\centering
    \begin{minipage}{0.4\linewidth}
        \centering
        \begin{tabular}{ccc}
        \toprule
        Noise level $\sigma$ & $L^1$ error & $L^2$ error  \\ \midrule
        1   & $22.74\%$ & $27.90\%$     \\ 
        $1\mathrm{e}-1$   & $1.04\%$ & $1.42\%$      \\ 
        $1\mathrm{e}-2$   & $0.91\%$ & $1.37\%$      \\ 
        $1\mathrm{e}-3$  & $0.96\%$ & $1.39\%$  \\ \bottomrule
        \end{tabular}
    \end{minipage}
    \begin{minipage}{0.53\linewidth}
        \centering
        \begin{tabular}{cccc}
        \toprule
        Sample size $K$ & $L^1$ error & $L^2$ error   & Training time \\ \midrule
        256   & $1.11\%$ & $1.42\%$ & $1.34$h       \\ 
        512   & $0.98\%$ & $1.32\%$ & $2.63$h       \\ 
        1024   & $0.96\%$ & $1.38\%$ & $5.59$h       \\ 
        2048  & $0.91\%$ & $1.37\%$ & $12.07$h     \\ \bottomrule
        \end{tabular}
    \end{minipage}
\end{table*}

we conduct ablation studies in this section to examine the effects of different design choices.

\textbf{Regarding the choice of derivative estimators.} To compare different statistical estimators introduced in the paper, we conduct several numerical experiments and study the approximation errors of each estimator. We experiment with a Gaussian-smoothed model $u(x)=\E_{\delta\sim \gN(0,\sigma^2 \mI)}f(x+\delta)$, where the base neural network $f(x)$ is a randomly initialized and then fixed 4-layer MLP. The input/output dimension is set to 1000/1 respectively. We set the noise level $\sigma$ to $0.1$. We consider the derivative estimators for the gradient $\nabla_x u$ and the Laplace operator $\Delta u$. We randomly sample $10^3$ points in $[0,1]^{1000}$. For each sampled point $x$, we sample $10^5$ Gaussian noises and use back-propagation to calculate $\nabla_x u$ and $\Delta u$ as an oracle. Empirically, we observe the output of the oracle is very stable, whose variance is less than $1\mathrm{e}-7$. We compute the $L_1$ distance between the derivatives returned by the oracle and the derivative estimators used in the paper as the evaluation metric. We compare three estimators: a) the vanilla Stein's derivative estimators, defined in Eqn. (\ref{eq:monte-carlo}); b) variance-reduced estimators with the control variate method, defined in Eqn. (\ref{eqn:control-variates}) and (\ref{eqn:control-variates2}); c) variance-reduced estimators with both the control variate and the antithetic variable method, defined in Eqn. (\ref{eqn:antithetic-variable}) and (\ref{eqn:antithetic-variable2}). 
For each estimator, we vary the sample size $K$ from $8$ to $32768$.

The results are shown in Figure \ref{fig:ablate}. For the first-order derivative, i.e., $\nabla_x u$, the vanilla Stein's derivative estimator is already accurate, and the variance-reduced ones further improve it slightly. For the second-order derivative, i.e., $\Delta u$, the vanilla Stein's derivative estimator performs poorly, whose error is larger than $1\mathrm{e}-2$ even with sample size $K=32768$. The control variate method slightly improves, but the resulting estimator is still inaccurate given reasonable sample size. Combining the control variate method with the antithetic variable method significantly reduces the approximation error: The corresponding estimator is $100\times$ more accurate than the vanilla one. These observations suggest that variance reduction is essential in using Stein's Identity, especially for estimating high-order derivatives. 

\textbf{Regarding the choice of the noise level $\sigma$.} As discussed in Section \ref{sec:model_capacity}, the noise level $\sigma$ controls the expressive power of the Gaussian-smoothed model $u(x)$, which will greatly affect the performance of the learned models. To understand the influence of $\sigma$ during training, we conduct experiments on 250-dimensional HJB Equation with $\sigma$ ranging from $1\mathrm{e}-3$ to 1. In all these experiments, we set the sample size $K$ to 2048. We tune the learning rate and the loss coefficient in the experiment and report the \textit{best} result for each setting.

The results are shown in the left part of Table \ref{tab:ablate}. Note that changing the noise level $\sigma$ does not affect the computational complexity. Thus, the training time of different models are nearly the same, and we only report the relative error for brevity.  
It can be seen that the learned solutions are accurate when $\sigma\leq 0.1$, and the value of $\sigma$ does not affect the model's performance much. 
However, when $\sigma$ becomes too large, e.g., $\sigma=1$, the relative error of the model is high at the end of training. This indicates that the solution of the PDE may not be in the function class that the model can express. Note that $\sigma$ cannot be arbitrarily small because the calculation of the second-order derivatives involves small values, e.g., $\sigma^4$, which will introduce round-off error when we use float precision.

\textbf{Regarding the choice of sample size $K$.} In our method, the sample size $K$ is a hyper-parameter that controls the speed-accuracy trade-off. When $K$ is larger, the approximation of the derivatives is more accurate, while the training would be slower. We run experiments on 250-dimensional HJB Equation with the sample size $K$ ranging from $256$ to $2048$.  In all these experiments, we set the noise level $\sigma$ to 0.01. We tune the learning rate and the loss coefficient in the experiment and report the \textit{best} result for each setting.

The results are shown in the right part of Table \ref{tab:ablate}.  From the results we can see that a larger sample size $K$ can lead to better performance, at the cost of increased training time. However, the training is not very sensitive to the sample size. Even with only 256 samples, our method is still accurate (and extremely efficient). This is because the variance-reduced Stein's derivative estimators can provide sufficiently accurate estimation given a small sample size.

\section{CONCLUSION}
In this paper, we develop a novel approach that can significantly accelerate the training of Physics-Informed Neural Networks. In particular, we parameterize the PDE solution by the Gaussian smoothed model and show that, as derived from Stein's Identity, the second-order derivatives can be efficiently calculated without back-propagation. Experimental results show that our proposed method can significantly accelerate PINN without sacrificing accuracy. One limitation of our work is that our method only leads to acceleration for high dimensional PDE. Accelerating PINN on more classes of problems can be an exciting direction for future work. 
We believe this work is an initial step towards efficiently solving the high-dimensional partial differential equations using deep learning approaches and will address various other challenges along the way. 

\section*{ACKNOWLEDGMENTS}
We thank Bin Dong and Yiping Lu for their helpful discussions.
This work is supported by National Science Foundation of China (NSFC62276005), The Major Key Project of PCL (PCL2021A12), Exploratory Research Project of Zhejiang Lab (No. 2022RC0AN02), and Project 2020BD006 supported by PKUBaidu Fund.

\bibliography{main}
\bibliographystyle{apalike}

\appendix
\onecolumn

\section{OMITTED PROOFS}
In this section, we always assume $x\in\R^d$, $f(x)$ is a measurable function, and that $u(x)=\E_{\delta\sim \gN(0,\sigma^2 \mI)}f(x+\delta)$.

\subsection{Proof of Theorem 1}
\begin{proof} 
Note that 
\begin{equation}
\label{eq:int-u}
     u(x)=(2\pi)^{-\frac{n}{2}} \int_{\R^d}\mathrm{e}^{-\frac{\|\delta\|^2}{2\sigma^2}} f(x+\delta) \mathrm{d} \delta
=(2\pi)^{-\frac{n}{2}} \int_{\R^d}  \mathrm{e}^{-\frac{\|t-x\|^2}{2\sigma^2}} f(t) \mathrm{d} t.
\end{equation}
   
We have
\begin{align*}
    \nabla_x u(x)&=(2\pi)^{-\frac{n}{2}} \int_{\R^d}  \nabla_x \mathrm{e}^{-\frac{\|t-x\|^2}{2\sigma^2}} f(t) \mathrm{d}t
    =(2\pi)^{-\frac{n}{2}} \int_{\R^d}  \frac{t-x}{\sigma^2} \mathrm{e}^{-\frac{\|t-x\|^2}{2\sigma^2}} f(t) \mathrm{d}t\\
    &=(2\pi)^{-\frac{n}{2}} \int_{\R^d}  \frac{\delta}{\sigma^2} \mathrm{e}^{-\frac{\|\delta\|^2}{2\sigma^2}} f(x+\delta) \mathrm{d}\delta
    =\E_{\delta\sim \gN(0,\sigma^2 \mI)}\left[\frac{\delta}{\sigma^2}f(x+\delta)\right],
\end{align*}
which concludes the proof.
\end{proof}

\subsection{Deviations of the Second Order Stein's derivative estimators}
\begin{proposition}
\label{prop:second-order}
We have
\begin{align}
    \boldsymbol{H}u & =\E_{\delta\sim \gN(0,\sigma^2\mI)}\left[\left(\frac{\delta\delta^{\top}-\sigma^2\mI}{\sigma^4}\right)f(x+\delta)\right];\label{eq:prop-hessian}\\
    \Delta u & =\E_{\delta\sim \gN(0,\sigma^2\mI)}\left[\left(\frac{\|\delta\|^2-\sigma^2d}{\sigma^4}\right)f(x+\delta)\right].\label{eq:prop-laplacian}
\end{align}
\end{proposition}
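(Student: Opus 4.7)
The plan is to mirror the argument used for Theorem~1, this time differentiating the convolution representation (\ref{eq:int-u}) of $u$ twice under the integral sign. Fix $x$ and write $\kappa(t;x) := \mathrm{e}^{-\|t-x\|^2/(2\sigma^2)}$ for the Gaussian kernel. The calculation underlying Theorem~1 already shows $\nabla_x \kappa(t;x) = \frac{t-x}{\sigma^2}\kappa(t;x)$. Applying the product rule a second time (i.e.\ computing $\partial_{x_i}\partial_{x_j}\kappa$ and collecting the $-\frac{1}{\sigma^2}\mI$ piece from differentiating the linear factor and the $\frac{(t-x)(t-x)^\top}{\sigma^4}$ piece from differentiating the exponential) yields
$$\nabla_x^2 \kappa(t;x) \;=\; \frac{(t-x)(t-x)^\top - \sigma^2\mI}{\sigma^4}\,\kappa(t;x).$$

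Plugging this back into (\ref{eq:int-u}), pulling the $x$-derivatives inside the $t$-integral, and changing variables $\delta = t-x$ immediately delivers Eqn.~(\ref{eq:prop-hessian}). Eqn.~(\ref{eq:prop-laplacian}) then follows for free by taking the trace of both sides, since $\Delta u = \Tr(\boldsymbol{H}u)$, $\Tr(\delta\delta^\top) = \|\delta\|^2$, and $\Tr(\mI) = d$; alternatively, one can derive it directly as $\sum_i \partial_{x_i}^2 u$ and read off the coefficient $(\|\delta\|^2 - \sigma^2 d)/\sigma^4$.

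The only genuine technical obstacle is justifying the interchange of differentiation and integration twice. Because the Gaussian kernel $\kappa$ decays faster than any polynomial in $t$, under mild assumptions on $f$ (for instance, the boundedness implicitly invoked in Theorem~\ref{lem:lip}, or at most polynomial growth), the integrands and their first- and second-order $x$-derivatives are dominated by integrable Gaussian-type envelopes uniformly on compact $x$-neighborhoods. The dominated convergence theorem then licenses both exchanges, and everything else is mechanical product-rule bookkeeping of the sort carried out above.
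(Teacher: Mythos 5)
Your proposal is correct and follows essentially the same route as the paper's proof: differentiate the Gaussian convolution representation of $u$ twice under the integral sign (the paper does this entrywise for $\partial_{x_i}\partial_{x_j}$ rather than in matrix form, which is only a cosmetic difference), change variables $\delta = t-x$, and obtain the Laplacian identity by taking the trace of the Hessian identity. Your added remark justifying the interchange of differentiation and integration via Gaussian domination is a detail the paper omits, but it does not change the argument.
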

\begin{proof}
Note that $u(x)$ can be expressed as in Eqn.(\ref{eq:int-u}). Therefore, for $i,j\in\{1,\cdots,d\}$ and $i\neq j$, we have
\begin{align}
    \frac{\partial^2}{\partial x_i \partial x_j} u(x)&=(2\pi)^{-\frac{n}{2}} \int_{\R^d}  \frac{\partial^2}{\partial x_i \partial x_j} \mathrm{e}^{-\frac{\|t-x\|^2}{2\sigma^2}} f(t) \mathrm{d}t =(2\pi)^{-\frac{n}{2}} \int_{\R^d}  \frac{(t_i-x_i)(t_j-x_j)}{\sigma^4} \mathrm{e}^{-\frac{\|t-x\|^2}{2\sigma^2}} f(t) \mathrm{d}t \nonumber\\ 
    &=(2\pi)^{-\frac{n}{2}} \int_{\R^d}  \frac{\delta_i \delta_j}{\sigma^4} \mathrm{e}^{-\frac{\|\delta\|^2}{2\sigma^2}} f(x+\delta) \mathrm{d}\delta =\E_{\delta\sim \gN(0,\sigma^2 \mI)}\left[\frac{\delta_i \delta_j}{\sigma^4}f(x+\delta)\right], \label{eq:prop-partial-ij}
\end{align}

For the case where $i=j$, similar computations will yield
\begin{equation}
    \frac{\partial^2}{\partial x_i^2} u(x)=(2\pi)^{-\frac{n}{2}} \int_{\R^d}  \frac{\partial^2}{\partial x_i^2} \mathrm{e}^{-\frac{\|t-x\|^2}{2\sigma^2}} f(t) \mathrm{d}t
    =\E_{\delta\sim \gN(0,\sigma^2 \mI)}\left[\frac{\delta_i^2-1}{\sigma^4}f(x+\delta)\right],\label{eq:prop-partial-i2}
\end{equation}

By definition of Hessian matrix, Eqn.(\ref{eq:prop-hessian}) can be proven by combining Eqn.(\ref{eq:prop-partial-ij}) and (\ref{eq:prop-partial-i2}).

As for Eqn.(\ref{eq:prop-laplacian}), note that $\Delta u=\mathrm{tr}(\boldsymbol{H}u)$, where $\mathrm{tr}(\cdot)$ denotes the trace of a matrix. Therefore, 
\begin{equation*}
    \Delta u =\E\left[\mathrm{tr}\left(\frac{\delta\delta^{\top}-\sigma^2\mI}{\sigma^4}\right)f(x+\delta)\right]=\E\left[\left(\frac{\|\delta\|^2-\sigma^2d}{\sigma^4}\right)f(x+\delta)\right],
\end{equation*}
which concludes the proof.
\end{proof}

\subsection{Deviations of the Variance-Reduced Stein's derivative estimators}
\begin{proposition}[Estimators based on the control variate method]
\begin{align}
    \label{eqn:control-variates-app}
    &\nabla_x u=\E_{\delta\sim \gN(0,\sigma^2 \mI)}\left[\frac{\delta}{\sigma^2}(f(x+\delta)-f(x))\right]; \\
    \label{eqn:control-variates2-app}
    &\Delta u=\E\left[\left(\frac{\|\delta\|^2-\sigma^2d}{\sigma^4}\right)(f(x+\delta)-f(x))\right].
\end{align}
\end{proposition}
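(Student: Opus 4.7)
The plan is to deduce both identities directly from Theorem~\ref{thm:main} and Proposition~\ref{prop:second-order} by exploiting the fact that $f(x)$ is constant in $\delta$ and adding a mean-zero quantity inside the expectation. The key observation is that for $\delta \sim \gN(0, \sigma^2 \mI)$, we have $\E[\delta] = 0$ and $\E[\|\delta\|^2] = \sigma^2 d$, so
\begin{equation*}
\E_{\delta\sim \gN(0,\sigma^2 \mI)}\left[\frac{\delta}{\sigma^2}\right] = 0 \quad \text{and} \quad \E_{\delta\sim \gN(0,\sigma^2 \mI)}\left[\frac{\|\delta\|^2 - \sigma^2 d}{\sigma^4}\right] = 0.
\end{equation*}

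For the gradient identity in Eqn.~(\ref{eqn:control-variates-app}), I would expand by linearity of expectation:
\begin{equation*}
\E\left[\frac{\delta}{\sigma^2}(f(x+\delta) - f(x))\right] = \E\left[\frac{\delta}{\sigma^2} f(x+\delta)\right] - f(x)\,\E\left[\frac{\delta}{\sigma^2}\right].
\end{equation*}
The first term equals $\nabla_x u$ by Theorem~\ref{thm:main}, and the second term vanishes by the observation above, yielding the claim.

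For the Laplacian identity in Eqn.~(\ref{eqn:control-variates2-app}), the strategy is identical: split the expectation, recognize that the contribution of the $f(x)$ piece is $f(x) \cdot \E[(\|\delta\|^2-\sigma^2 d)/\sigma^4] = 0$, and apply Eqn.~(\ref{eq:prop-laplacian}) from Proposition~\ref{prop:second-order} to the remaining term to obtain $\Delta u$.

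There is no hard part here — the argument is a one-line application of linearity of expectation together with the two vanishing moments of the Gaussian. The conceptual content is simply that $f(x)$ functions as a valid additive control variate: being independent of $\delta$, it can be multiplied by any zero-mean statistic of $\delta$ without altering the value of the expectation, while (as discussed in Section~\ref{sec:var_reduction}) it can substantially reduce the variance of the corresponding Monte Carlo estimator.
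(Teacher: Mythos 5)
Your proof is correct and follows essentially the same route as the paper's: both arguments note that $\E[\delta]=0$ and $\E[\|\delta\|^2-\sigma^2 d]=0$, so the $f(x)$ control-variate term contributes nothing, and then invoke Theorem~1 and Proposition~\ref{prop:second-order} together with linearity of expectation. No gaps.
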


\begin{proof}
Note that $\delta\sim \gN(0,\sigma^2 \mI)$ implies $\E [\delta] = 0$ and $\E[ \|\delta\|^2-\sigma^2d ]= 0$. 

Thus, $\E [\delta f(x)] = \E[\|\delta\|^2-\sigma^2d f(x) ]= 0$. Applying Theorem 1 and Proposition \ref{prop:second-order}, Eqn.(\ref{eqn:control-variates-app}) and (\ref{eqn:control-variates2-app}) hold by linearity of expectation.
\end{proof}

\begin{proposition}[Estimators based on the control variate and antithetic variable method]
\begin{align}
    \label{eqn:antithetic-variable-app}
    \nabla_x u=&\E\left[\frac{\delta}{2\sigma^2}(f(x+\delta)-f(x-\delta))\right];\\
    \label{eqn:antithetic-variable2-app}
    \Delta u=&\E\left[\left(\frac{\|\delta\|^2-\sigma^2d}{2\sigma^4}\right)(f(x+\delta)+f(x-\delta)-2f(x))\right]
\end{align}
\end{proposition}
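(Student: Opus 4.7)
The plan is to derive both identities by applying the antithetic variable trick on top of the control variate estimators already established in the previous proposition. The underlying principle is that the Gaussian density $\mathcal{N}(0,\sigma^2 \mI)$ is symmetric about the origin, so the pushforward of $\delta$ under the map $\delta \mapsto -\delta$ has the same distribution. Hence any identity of the form $\nabla_x u = \E[g(\delta)]$ or $\Delta u = \E[h(\delta)]$ can be rewritten as $\E[g(-\delta)]$ or $\E[h(-\delta)]$, and averaging the two equivalent expressions produces the antithetic form.

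For the gradient identity (\ref{eqn:antithetic-variable-app}), I would start from the control variate identity $\nabla_x u = \E[\tfrac{\delta}{\sigma^2}(f(x+\delta)-f(x))]$. Substituting $\delta \mapsto -\delta$ (which preserves the measure) yields $\nabla_x u = \E[\tfrac{-\delta}{\sigma^2}(f(x-\delta)-f(x))] = \E[\tfrac{\delta}{\sigma^2}(f(x)-f(x-\delta))]$. Averaging the two expressions and simplifying gives
\[
\nabla_x u = \E\!\left[\frac{\delta}{2\sigma^2}\bigl(f(x+\delta)-f(x-\delta)\bigr)\right],
\]
where the $f(x)$ baseline terms cancel.

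For the Laplacian identity (\ref{eqn:antithetic-variable2-app}), I would begin from the control variate identity $\Delta u = \E[\tfrac{\|\delta\|^2-\sigma^2 d}{\sigma^4}(f(x+\delta)-f(x))]$. Since $\|-\delta\|^2 = \|\delta\|^2$, substituting $\delta \mapsto -\delta$ preserves the scalar prefactor and gives $\Delta u = \E[\tfrac{\|\delta\|^2-\sigma^2 d}{\sigma^4}(f(x-\delta)-f(x))]$. Averaging the two equivalent expressions yields
\[
\Delta u = \E\!\left[\left(\frac{\|\delta\|^2-\sigma^2 d}{2\sigma^4}\right)\bigl(f(x+\delta)+f(x-\delta)-2f(x)\bigr)\right],
\]
as claimed.

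Both derivations are essentially mechanical consequences of symmetry together with linearity of expectation, so I do not anticipate any real obstacle. The only subtlety worth noting is that the $f(x)$ baseline terms contribute zero in expectation (as already verified in the control variate proposition via $\E[\delta] = 0$ and $\E[\|\delta\|^2 - \sigma^2 d] = 0$), which justifies both their initial inclusion and the algebraic simplifications carried out after averaging the antithetic pair.
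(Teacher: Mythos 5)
Your proposal is correct and follows essentially the same route as the paper: exploit the symmetry $\delta \mapsto -\delta$ of the Gaussian measure to rewrite each control variate identity, then average the two equivalent expressions (with the $f(x)$ terms cancelling for the gradient and doubling for the Laplacian). The paper's proof is a terser version of exactly this argument.
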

\begin{proof}
    Note that $\delta\sim \gN(0,\sigma^2 \mI)$ implies $-\delta\sim \gN(0,\sigma^2 \mI)$. Thus, we can substitute $\delta$ with $-\delta$ in Eqn.(\ref{eqn:control-variates-app}) and obtain
    \begin{equation}
        \label{eqn:control-variates-anti}
        \nabla_x u=\E_{\delta\sim \gN(0,\sigma^2 \mI)}\left[-\frac{\delta}{\sigma^2}(f(x-\delta)-f(x))\right]
    \end{equation}
    
    The summation of Eqn.(\ref{eqn:control-variates-app}) and (\ref{eqn:control-variates-anti}) gives Eqn.(\ref{eqn:antithetic-variable-app}). Eqn.(\ref{eqn:antithetic-variable2-app}) is proven similarly.
\end{proof}

\section{EXPERIMENTAL SETTINGS}
\subsection{Poisson's Equation}
\paragraph{Hyperparameters.} The hyperparameters used in our experiment on Poisson's Equation are described in Table \ref{tab:poisson}.

\begin{table}[h]
    \caption{\textbf{Derailed experimental settings} of Poisson's Equation.}
    \label{tab:poisson}
        \begin{center}
            \begin{tabular}{lc}
                \toprule
                \textit{Model Configuration} \\
                \midrule
                \textbf{Layers} & 4 \\
                \textbf{Hidden dimension} & 256  \\
                \textbf{Activation} & $\mathrm{tanh}$ \\
                \textbf{Noise level $\sigma$} & $0.01$ \\
                \textbf{Sample size $K$} & 2048 \\
                \midrule
                \textit{Hyperparameters} \\
                \midrule
                \textbf{Total iterations} & 1000 \\
                \textbf{Domain Batch Size} $N_1$ & 100  \\
                \textbf{Boundary Batch Size} $N_2$ & 100 \\
                \textbf{Boundary Loss Weight} $\lambda$ & 300 \\
                \textbf{Learning Rate} & $1\mathrm{e}-3$ \\
                \textbf{Learning Rate Decay} & Linear\\
                \textbf{Adam $\varepsilon$} & $1\mathrm{e}-8$ \\
                \textbf{Adam($\beta_1$, $\beta_2$)} & (0.9, 0.999) \\
                \bottomrule
            \end{tabular}
    \end{center}
\end{table}

\paragraph{Training data.} The training data is sampled \textit{online}. Specifically, in each iteration, we sample $N_1$ i.i.d. data points, $x^{(1)}, \cdots, x^{(N_1)}$, uniformly from the domain $\Omega$, and $N_2$ i.i.d. data points, $\tilde x^{(1)}, \cdots, \tilde x^{(N_2)}$, uniformly from the boundary $\partial \Omega$.

\subsection{Heat Equation}

\paragraph{Hyperparameters.} The hyperparameters used in our experiment on HJB Equation are described in Table \ref{tab:heat}.

\begin{table}[!t]
    \caption{\textbf{Derailed experimental settings} of Heat Equation.}
    \label{tab:heat}
        \begin{center}
            \begin{tabular}{lc}
                \toprule
                \textit{Model Configuration} \\
                \midrule
                \textbf{Layers} & 4 \\
                \textbf{Hidden dimension} & 256  \\
                \textbf{Activation} & $\mathrm{tanh}$ \\
                \textbf{Noise level $\sigma$} & $0.01$ \\
                \textbf{Sample size $K$} & 2048 \\
                \midrule
                \textit{Hyperparameters} \\
                \midrule
                \textbf{Total iterations} & 1000 \\
                \textbf{Domain Batch Size} $N_1$ & 50  \\
                \textbf{Initial Condition Batch Size} $N_2$ & 50 \\
                \textbf{Spatial Boundary Batch Size} $N_3$ & 50 \\
                \textbf{Initial Condition Weight} $\lambda_2$ & 1000 \\
                \textbf{Spatial Boundary Weight} $\lambda_3$ & 1000 \\
                \textbf{Learning Rate} & $1\mathrm{e}-3$ \\
                \textbf{Learning Rate Decay} & Linear\\
                \textbf{Adam $\varepsilon$} & $1\mathrm{e}-8$ \\
                \textbf{Adam($\beta_1$, $\beta_2$)} & (0.9, 0.999) \\
                \bottomrule
            \end{tabular}
    \end{center}
\end{table}

\paragraph{Training data.} The training data is sampled \textit{online}. Specifically, in each iteration, we \textit{uniformly} sample $N_1$ i.i.d. data points, $(x^{(1)},t^{(1)}), \cdots, (x^{(N_1)},t^{(N_1)}) $, from the domain $B(0,1)\times (0,1)$; $N_2$ i.i.d. data points, $(\tilde x^{(1)},0), \cdots, (\tilde x^{(N_2)},0)$, from $B(0,1) \times \{0\}$; and $N_3$ i.i.d. data points, $(\hat x^{(1)},\hat t^{(1)}), \cdots, (\hat x^{(N_3)},\hat t^{(N_3)}) $, from $\partial B(0,1) \times [0,1]$.

\subsection{HJB Equation}

\paragraph{Hyperparameters.} The hyperparameters used in our experiment on HJB Equation are described in Table \ref{tab:hjb}.

\begin{table}[!h]
    \caption{\textbf{Derailed experimental settings} of HJB Equation.}
    \label{tab:hjb}
        \begin{center}
            \begin{tabular}{lc}
                \toprule
                \textit{Model Configuration} \\
                \midrule
                \textbf{Layers} & 4 \\
                \textbf{Hidden dimension} & 768  \\
                \textbf{Activation} & $\mathrm{tanh}$ \\
                \textbf{Noise level $\sigma$} & $0.01$ \\
                \textbf{Sample size $K$} & 2048 \\
                \midrule
                \textit{Hyperparameters} \\
                \midrule
                \textbf{Total iterations} & 10000 \\
                \textbf{Domain Batch Size} $N_1$ & 50  \\
                \textbf{Boundary Batch Size} $N_2$ & 50 \\
                \textbf{Boundary Loss Weight} $\lambda$ & 500 \\
                \textbf{(Adversarial training) Inner Loop Iterations $K$} & 20 \\
                \textbf{(Adversarial training) Inner Loop Step Size $\eta$} & 0.05 \\
                \textbf{Learning Rate} & $2\mathrm{e}-4$ \\
                \textbf{Learning Rate Decay} & Linear\\
                \textbf{Adam $\varepsilon$} & $1\mathrm{e}-8$ \\
                \textbf{Adam($\beta_1$, $\beta_2$)} & (0.9, 0.999) \\
                \bottomrule
            \end{tabular}
    \end{center}
\end{table}

\paragraph{Training data.} The training data is sampled \textit{online}. Specifically, in each iteration, we sample $N_1$ i.i.d. data points, $(x^{(1)},t^{(1)}), \cdots, (x^{(N_1)},t^{(N_1)}) $, from the domain $\sR^n\times [0,T]$, and $N_2$ i.i.d. data points, $(\tilde x^{(1)},T), \cdots, (\tilde x^{(N_2)},T)$, from the boundary $\sR^n\times \{T\}$, where $(x^{(i)},t^{(i)}) \sim \mathcal{N}(\vzero,\mI_{n})\times \mathcal{U}(0,1)$ and $\tilde x^{(j)} \sim \mathcal{N}(\vzero,\mI_{n})$.

\vfill
\end{document}